\documentclass{article}

\PassOptionsToPackage{numbers, compress}{natbib}
\usepackage[preprint]{neurips_2026}

\usepackage{array}
\usepackage[utf8]{inputenc}
\usepackage[T1]{fontenc}
\usepackage[
    colorlinks=true,
    linkcolor=blue,
    citecolor=blue,
    urlcolor=blue
]{hyperref}
\usepackage{url}
\usepackage{booktabs}
\usepackage{amsfonts}
\usepackage{nicefrac}
\usepackage{microtype}
\usepackage{xcolor}
\usepackage{graphicx}
\usepackage{amssymb}
\usepackage{amsmath}
\usepackage{multirow}
\usepackage{makecell}
\usepackage{enumitem}
\usepackage{tabularx}
\usepackage{subcaption}
\usepackage{caption}
\usepackage[capitalize,noabbrev]{cleveref}

\usepackage{amssymb}
\usepackage{mathtools}
\usepackage{amsthm}

\usepackage[capitalize,noabbrev]{cleveref}
\usepackage{amssymb}
\usepackage{amsmath}
\usepackage{bbm}

\theoremstyle{plain}
\newtheorem{theorem}{Theorem}[section]
\newtheorem{proposition}[theorem]{Proposition}

\theoremstyle{definition}

\theoremstyle{remark}

\title{Off-Policy Learning to Reason Works Because It Is More Pessimistic Than You Think}

\author{%
  Otmane Sakhi\\
  Criteo AI Lab \\
  \texttt{o.sakhi@criteo.com} \\
  \And
  Aleksei Arzhantsev\\
  Criteo AI Lab, CREST IP Paris \\
  \texttt{a.arzhantsev@criteo.com} \\
  \And
  Imad Aouali \\
  Criteo AI Lab \\
  \texttt{i.aouali@criteo.com}
  \And
  Flavian Vasile \\
  Criteo AI Lab \\
  \texttt{f.vasile@criteo.com}
}

\begin{document}
\maketitle

\begin{abstract}

Large scale reinforcement learning has become a central tool for improving reasoning in large language models. At this scale, generation is often lagged or asynchronous, so updates are performed on data collected by older policies. This makes learning inherently off-policy. Most existing approaches nevertheless remain rooted in PPO-style trust-region objectives, treating training as approximately on-policy and using importance weights to correct distribution mismatch. These corrections can introduce high variance, destabilize optimization, and accelerate entropy collapse. Recent work suggests an alternative: rather than correcting the mismatch, one can embrace off-policy data and remove importance weights, often yielding stronger algorithms. In this paper, we provide an intuitive construction of off-policy objectives that include successful off-policy objectives and show that their effectiveness can be understood through implicit pessimism: they optimize toward target policies that are more conservative than their nominal objectives suggest. This perspective explains why some particular implementation choices improve stability: they implicitly control the effective target distribution. We then propose a principled modification that stabilize this induced distribution and improve off-policy learning.
\end{abstract}


\section{Introduction}
\label{sec:intro}

Reinforcement learning (RL) has emerged as a fundamental tool for improving
the reasoning capabilities of large language models (LLMs)~\cite{openai2026openaio1card, deepseekai2025, yu2026dapo}.
By optimizing models against task-specific or verifiable rewards, RL enables
gains beyond supervised fine-tuning~\cite{openai2026openaio1card, deepseekai2025}, particularly on complex
reasoning tasks such as mathematics, coding, and multi-step
problem solving~\cite{NEURIPS2022_18abbeef, deepseekai2025, kimi}.

The reinforcement learning pipelines used for modern language models,
however, differ significantly from the classical online setting. At large
scale, generating rollouts for reasoning models is expensive. Completions are often produced
asynchronously, by lagged inference workers, or by checkpoints that are
older than the policy currently being updated. As a result, the data used for
training are not sampled from the policy being optimized, but from an
older behavior policy. Large-scale reinforcement learning for language
model reasoning is therefore inherently off-policy \cite{kimi, noukhovitch2025faster, llmscanlearntoreasonfromoffpolicydata}.

Despite this, many existing methods are still motivated by on-policy or
near-on-policy trust-region algorithms \cite{pmlr-v37-schulman15}, such as PPO \cite{ppo} and GRPO \cite{deepseek-math}. These
methods attempt to correct the mismatch between the behavior policy and
the learned policy using importance weighting, clipping , filtering, or
related heuristics \cite{ppo, deepseek-math, yu2026dapo}. While these corrections are effective in some
settings, they can introduce high-variance updates and become unstable
when the policy lag is large. In practice, the mechanisms introduced to
control these ratios often become central to training stability, rather
than secondary implementation details.

Recent work suggests a different approach: instead of correcting
off-policy data to make it look on-policy, one can train directly on data
sampled from the behavior policy and remove importance weights entirely.
$A^\star$-PO \cite{brantley2026accelerating} and more recently OAPL \cite{llmscanlearntoreasonfromoffpolicydata} are a representative example of this direction. Both optimize a simple
regression objective, and has been
shown to work well in large-scale reasoning settings. However, the reason
for its success is not fully understood. In particular, their nominal derivations suggest one target distribution, while practical
implementations decouple the
temperature used for advantage computation from the temperature used for
regularization. These choices are empirically important, but their role is
not explained by the nominal derivation.

In this paper, we study ratio-free off-policy objectives.
Our goal is not to derive another trust-region surrogate, but to
understand what target policy these objectives actually optimize toward
when trained on data from an older policy. We show that the answer is
different from the usual exponential target associated with
KL-regularized policy optimization \cite{rafailov2023direct, pmlr-v37-schulman15}. Instead, these objectives induce a
Lambert-tempered target policy \cite{huang2025correcting}. This target can be more conservative than
the nominal exponential objective suggests, especially on high-advantage
samples and small regularizations. 

This observation provides a simple explanation for why off-policy
learning without importance weights can work well. The objective is not
merely ignoring the distribution mismatch. Rather, under the right
advantage normalization, the mismatch induces a form of implicit
pessimism \cite{jin2021pessimism, cql, NEURIPS2024_9379ea6b}: the learned policy increases the probability of good samples,
but does not over-commit exponentially to the highest observed rewards.
This is particularly valuable in language-model reasoning, where rollouts are stale and a small number of sampled completions
can otherwise dominate the update.

Our analysis also reveals that this pessimism is not automatic. The
target policy depends sensitively on how the advantage is normalized. We
identify three qualitative regimes. In the positive regime, the target is
conservative and tempers large advantages. At the boundary, the target
recovers the usual exponential update, which is fragile because small
changes in the advantage baseline can move the objective into a different
regime. In the negative regime, the target becomes more aggressive than
the exponential update, making it more prone to entropy collapse.

This perspective clarifies the behavior of OAPL. We show that the
advantage estimator suggested by its nominal theory can induce an
unstable target at the population level when the group size is finite,
even though it is normalized on each sampled group. This explains why the implementation of $A^\star$-PO \cite{brantley2026accelerating} and OAPL \cite{llmscanlearntoreasonfromoffpolicydata} deviates from the theory by using a
larger temperature $\beta$ for advantage computation than for regularization. Our
analysis shows that this choice moves the advantage closer to a centered
group advantage, which pushes the induced target back toward the pessimistic regime.

Motivated by this understanding, we propose a simpler, more principled
modification. Instead of relying on temperature decoupling and hyperparameter tuning as an
implementation heuristic, we directly choose the advantage baseline so
that the induced target lies in the pessimistic Lambert regime. The
resulting rule is a $\beta$-shifted mean advantage: start from the usual
group-centered reward advantage and shift it upward by $\beta$. This choice is easy to implement, introduces no additional
temperature hyperparameter, and guarantees that the induced Lambert target
is in the desired conservative regime.

Empirically, this modification improves the robustness of off-policy
learning. Compared with OAPL, the resulting method is less sensitive to
the regularization strength and remains more stable under larger policy
lags. In our experiments, it maintains entropy more reliably and avoids
the collapse or instability observed in OAPL under aggressive
regularization or stale rollouts.

\textbf{Contributions.} Our contributions are as follows:
\begin{itemize}
    \item We analyze a class of ratio-free off-policy objectives for language-model reasoning, including the objective underlying $A^\star$-PO/OAPL, and show that these objectives induce Lambert-tempered target policies whose behavior is controlled by an implicit pessimism parameter determined by the advantage normalization.

    \item We use this characterization to explain why the nominal $A^\star$-PO/OAPL
    advantage can induce an unstable target, and why the practical
    temperature decoupling used improves stability. We then propose a simple $\beta$-shifted-mean advantage that targets the
    pessimistic Lambert regime by construction, removing the need for the
    temperature-decoupling heuristic.

    \item We empirically show that this modification improves stability
    and robustness under varying temperatures and policy
    lags.
\end{itemize}

\paragraph{Outline.} The rest of the paper is organized as follows. Section~\ref{sec:preliminaries} introduces the
reinforcement learning setting for language-model reasoning and reviews
standard trust-region methods. Section~\ref{sec:mopo} studies ratio-free off-policy
objectives and derives their induced target policies. Section~\ref{sec:pess} analyzes
the resulting Lambert regimes, explains the behavior of OAPL, and
introduces our shifted-mean advantage. Section~\ref{sec:experiments} presents empirical
comparisons under different regularization strengths and policy lags while Section~\ref{sec:conclusion} concludes.

\section{Preliminaries}
\label{sec:preliminaries}

We consider reinforcement learning for language-model reasoning. Let
$x\sim\mathcal{D}$ denote a prompt and let
$y=(y_1,\ldots,y_T)$ denote a sampled completion. A policy
$\pi_\theta$ defines the autoregressive distribution
\[
    \pi_\theta(y\mid x)
    =
    \prod_{t=1}^{T}
    \pi_\theta(y_t\mid x,y_{<t}).
\]
After generating a completion, the model receives a scalar, bounded reward
$r(x,y) \in [0,1]$, for example from a verifier, a unit test, or a reward model.
For a group of $G \ge 2$ completions $y_{1:G}$ sampled for the same prompt, we
write $r_i=r(x,y_i)$ and use $\widehat A_i$ to denote a group-relative
score indicating whether completion $y_i$ should be upweighted or
downweighted relative to the group.

\paragraph{Trust-region optimization and GRPO.} Trust-region optimization has established itself as a staple for large scale, deep RL~\cite{deepseek-math, kimi, yang2025qwen3}. The idea is to optimize a policy that improves $\pi_{\mathrm{old}}$ while staying in its vicinity, in the KL regularized sense.
Group Relative Policy Optimization (GRPO) can be viewed as a PPO-style
trust-region method adapted to the LLM setting~\citep{ppo, deepseek-math}.
Given a behavior policy $\pi_{\mathrm{old}}$ used to generate completions,
GRPO optimizes a clipped surrogate objective of the form
\begin{align}
    \mathbb{E}_{y_{1:G}\sim \pi_{\mathrm{old}}(\cdot\mid x)}
    \left[
    \frac{1}{G}
    \sum_{i=1}^{G}
    \frac{1}{T_i}
    \sum_{t=1}^{T_i}
    \min\left(
        \rho_{i,t}(\theta)\widehat A_i,\,
        \mathrm{clip}\!\left(
            \rho_{i,t}(\theta),1-\epsilon,1+\epsilon
        \right)\widehat A_i
    \right)
    \right],
    \label{eq:grpo}
\end{align}
where
\[
    \rho_{i,t}(\theta)
    =
    \frac{
        \pi_\theta(y_{i,t}\mid x,y_{i,<t})
    }{
        \pi_{\mathrm{old}}(y_{i,t}\mid x,y_{i,<t})
    },
    \qquad
    \widehat A_i
    =
    \frac{r_i-\bar r_G}{\sigma_G}.
\]
The ratio $\rho_{i,t}(\theta)$ measures the change in token probability
between the policy being optimized and the behavior policy that generated
the data. The clipping operation restricts this ratio, preventing the new
policy from moving too far from $\pi_{\mathrm{old}}$ on sampled tokens.

This objective is effective when the data are fresh and the optimized
policy remains close to the behavior policy. However, large-scale LLM RL
often violates this near-on-policy assumption \cite{yu2026dapo, noukhovitch2025faster}. Indeed, rollouts are often generated by stale
checkpoints. As optimization proceeds, the current policy $\pi_\theta$ can
move far from the behavior policy $\pi_{\mathrm{old}}$, making the data
increasingly off-policy. In this regime, importance weighting becomes
high-variance, and clipping or filtering becomes a central stabilizing
heuristic rather than a minor implementation detail. 

\paragraph{Embracing Off-Policy Generations.}

A natural way to avoid importance weighting is to define the desired
policy directly as an improvement over the behavior policy
\(\pi_{\mathrm{old}}\). The dominant formulation is the KL-regularized
trust-region objective. For a fixed context \(x\), it solves
\[
    \max_{\pi}
    \;
    \mathbb E_{y\sim \pi(\cdot\mid x)}
    \left[
        r(y,x)
    \right]
    -
    \beta
    \operatorname{KL}
    \left(
        \pi(\cdot\mid x)
        \,\|\, 
        \pi_{\mathrm{old}}(\cdot\mid x)
    \right).
\]
The solution is the exponentially tilted behavior policy
\[
    \pi_\beta^\star(y\mid x)
    =
    \frac{
        \exp(r(y,x)/\beta)
    }{
        \mathbb E_{y'\sim \pi_{\mathrm{old}}(\cdot\mid x)}
        \left[
            \exp(r(y',x)/\beta)
        \right]
    }
    \pi_{\mathrm{old}}(y\mid x).
\]
Equivalently, the optimal sentence-level log-ratio satisfies
\[
    \beta
    \log
    \frac{
        \pi_\beta^\star(y\mid x)
    }{
        \pi_{\mathrm{old}}(y\mid x)
    }
    =
    r(y,x)
    -
    \beta
    \log
    \mathbb E_{y'\sim \pi_{\mathrm{old}}(\cdot\mid x)}
    \left[
        \exp(r(y',x)/\beta)
    \right].
\]
Thus the KL-regularized target can be viewed as specifying an optimal
advantage, or target log-ratio, relative to \(\pi_{\mathrm{old}}\).

In practice, the expectation above is unavailable and is estimated from a
group of completions
\(y_{1:G}\sim\pi_{\mathrm{old}}(\cdot\mid x)\), with
\(r_i=r(y_i,x)\). This gives the empirical advantage
\[
    \widehat A_i^\beta
    =
    r_i
    -
    \beta
    \log
    \left(
        \frac{1}{G}
        \sum_{j=1}^G
        \exp(r_j/\beta)
    \right).
\]
This estimator is biased for finite \(G\), due to the logarithm of the
empirical average, but it is consistent.

Recent off-policy objectives use this target without trying to correct
the behavior-policy samples into on-policy samples. Instead, they regress
the current policy log-ratio toward the KL-regularized target:
\[
    \min_{\pi_\theta}
    \;
    \mathbb E_{y_{1:G}\sim \pi_{\mathrm{old}}(\cdot\mid x)}
    \left[
        \frac{1}{G}
        \sum_{i=1}^G
        \left(
            \beta
            \log
            \frac{
                \pi_\theta(y_i\mid x)
            }{
                \pi_{\mathrm{old}}(y_i\mid x)
            }
            -
            \widehat A_i^\beta
        \right)^2
    \right].
\]
This objective embraces off-policy data rather than correcting it with
importance weights: the old policy provides the samples, and the loss
directly constrains how the learned policy should move relative to
\(\pi_{\mathrm{old}}\) on those samples.

\section{Regularized Off-Policy Optimization}
\label{sec:mopo}

\paragraph{Importance-weight-free learning can lead to behavior-tilted targets.} Instead of correcting samples from $\pi_{\mathrm{old}}$ to mimic on-policy data from
$\pi_\theta$, we first consider the most direct ratio-free objective:
weighted maximum likelihood on behavior-policy samples.

Let $G\geq 1$ and let $y_{1:G}\sim \pi_{\mathrm{old}}(\cdot\mid x)$
denote a group of completions sampled from the behavior policy. For each
completion $y_i$, let $r_i=r(x,y_i)$ be its reward and let
$u_i=u_G(r_i;r_{1:G})\geq 0$ be a nonnegative reward-dependent weight.
A natural off-policy objective is
\begin{align}
    \mathbb{E}_{y_{1:G}\sim \pi_{\mathrm{old}}(\cdot\mid x)}
    \left[
        \frac{1}{G}
        \sum_{i=1}^{G}
        u_i
        \log \pi_\theta(y_i\mid x)
    \right].
    \label{eq:weighted-mle}
\end{align}
This objective is attractive because it contains no importance-weighting
ratio. The behavior policy simply provides samples, and high-reward
samples receive larger weights.

A common choice in offline RL \cite{awc, roirl} is an exponential reward weight,
$u_i
    =
    \exp\left(
        (r_i-\bar r_G)/\eta
    \right),$
which up-weights completions that outperform the group average. This objective is desirable as it makes the behavior policy
the base measure of the target distribution. To see this, fix a prompt
$x$ and consider the population version of Eq.~\eqref{eq:weighted-mle}
over all sequence distributions $\pi(\cdot\mid x)$:
\[
    \max_{\pi\in\Delta}
    \;
    \mathbb{E}_{y\sim \pi_{\mathrm{old}}(\cdot\mid x)}
    \left[
        u(x,y)\log \pi(y\mid x)
    \right].
\]
The optimizer is $\pi^\star(y\mid x)
    \propto
    \pi_{\mathrm{old}}(y\mid x)
    u(x,y)$. Therefore, when $u(x,y)=\exp(r(x,y)/\eta)$,
\[
    \pi^\star(y\mid x)
    \propto
    \pi_{\mathrm{old}}(y\mid x)
    \exp\left(
        \frac{r(x,y)}{\eta}
    \right).
\]
Thus, weighted maximum likelihood recovers a
\emph{behavior-tilted} distribution \cite{rafailov2023direct}. This is the same qualitative form as
the exponentially tilted distributions arising from KL-regularized
trust-region objectives: the behavior policy remains the base measure, and
the reward only reweights it.


While the theoretical target is desirable, a major problem of this objective is that it can be overly aggressive in finite
samples. The expectation under $\pi_{\mathrm{old}}$ is approximated by a
small set of sampled completions, so weighted maximum likelihood can
rapidly concentrate probability mass on the observed high-weight samples.
With sharp exponential weights, this effect is amplified: the empirical
target becomes highly peaked, and optimization can collapse toward the
weighted empirical distribution. This leads to entropy collapse in practice and requires additional care to control~\citep{roirl}.

Ideally, we want an update that preserves the simplicity of
Eq.~\eqref{eq:weighted-mle} and does not require importance sampling,
while avoiding the aggressiveness of exponential behavior tilting.
We can achieve this by adding to weighted likelihood a simple quadratic
penalty in log-probability space.


\paragraph{Regularize toward \(\pi_{\mathrm{old}}\).}
Based on the previous observation, our off-policy objective implements the simple intuition of maximizing reward while staying close to \(\pi_{\mathrm{old}}\). We do so by adding a sentence-level regularizer to the weighted likelihood objective. Let \(\beta>0\). We focus on:
\begin{align}
\mathbb{E}_{y_{1:G}\sim \pi_{\mathrm{old}}(\cdot\mid x)}
    \left[
        \frac{1}{G}
        \sum_{i=1}^{G}
        \left(
            \widehat A_i
            \log \pi_\theta(y_{i}| x)
            -
            \frac{\beta}{2}
            \left(
                \log
                \frac{
                    \pi_\theta(y_{i}| x)
                }{
                    \pi_{\mathrm{old}}(y_{i}| x)
                }
            \right)^2
        \right)
    \right]\,,
    \label{eq:mopo-objective}
\end{align}

where \(\widehat A_i\) is a group-wise advantage that will be specified later. The first term is an advantage-weighted log-likelihood term. The second term
penalizes squared movement in sentence log-probability relative to the
behavior policy. Importantly, this penalty is not an importance weight:
it does not multiply the gradient by a likelihood ratio, and it does not
attempt to transform off-policy samples into on-policy samples. Instead,
it directly regularizes how far the learned policy can move from the
behavior policy on observed sentences.

This objective has a clean stationary target policy, characterized by the next proposition.

\begin{proposition}[Lambert-tempered policy -- sentence level]
\label{prop:mopo-lambert-sentence}
Fix a context \(x\), and let \(\mathcal Y(x)\) denote the set of possible
responses. Let \(A(x,y)\) be a sentence-level advantage for response
\(y\in\mathcal Y(x)\). Consider the sentence-level objective
\[
    \mathcal J_s(\pi)
    =
    \sum_{y\in\mathcal Y(x)}
    \pi_{\mathrm{old}}(y\mid x)
    \left[
        A(x,y)\log \pi(y\mid x)
        -
        \frac{\beta}{2}
        \left(
            \log
            \frac{
                \pi(y\mid x)
            }{
                \pi_{\mathrm{old}}(y\mid x)
            }
        \right)^2
    \right],
\]
optimized over distributions \(\pi(\cdot\mid x)\in\Delta(\mathcal Y(x))\),
with \(\beta>0\). Let \(\pi^\star(\cdot\mid x)\) be an interior stationary
point of \(\mathcal J_s\). If its normalization multiplier satisfies
\(\tau_s> 0\), then for every \(y\in\mathcal Y(x)\),
\begin{align}
    \pi^\star(y\mid x)
    =
    \frac{
        \pi_{\mathrm{old}}(y\mid x)
    }{
        \tau_s
    }
    W_0
    \left(
        \tau_s
        \exp\left(\frac{A(x,y)}{\beta}\right)
    \right),
    \label{eq:mopo-lambert-sentence}
\end{align}
where \(W_0\) is the principal branch of the Lambert \(W\) function, and
\(\tau_s\) is chosen so that
\[
    \sum_{y\in\mathcal Y(x)} \pi^\star(y\mid x)=1.
\]
In the limit \(\tau_s\to 0\), Eq.~\eqref{eq:mopo-lambert-sentence}
recovers
\[
    \pi^\star(y\mid x)
    \propto
    \pi_{\mathrm{old}}(y\mid x)
    \exp\left(\frac{A(x,y)}{\beta}\right).
\]
\end{proposition}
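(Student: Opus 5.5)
We set up the Lagrangian for the simplex constraint and solve the per-response first-order condition explicitly. Positivity constraints are inactive at an interior point, so only the normalization multiplier is needed. Write \(p_y=\pi(y\mid x)\), \(q_y=\pi_{\mathrm{old}}(y\mid x)>0\), \(A_y=A(x,y)\), and consider
\[
    \mathcal L(p,\lambda)=\sum_y q_y\Big[A_y\log p_y-\tfrac{\beta}{2}\big(\log p_y-\log q_y\big)^2\Big]-\lambda\Big(\sum_y p_y-1\Big).
\]
At an interior stationary point \(\pi^\star\), the derivative in \(p_y\) vanishes:
\[
    \frac{q_y}{p_y}\Big[A_y-\beta\log\frac{p_y}{q_y}\Big]=\lambda .
\]
Setting \(z_y=p_y/q_y>0\) turns this into the scalar equation
\[
    A_y-\beta\log z_y=\lambda z_y .
\]
We will define the normalization multiplier by \(\tau_s:=\lambda/\beta\). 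The hypothesis \(\tau_s>0\) is then an assumption on the sign of \(\lambda\).

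Next we solve the scalar equation with the Lambert function. Dividing by \(\beta\) and exponentiating gives
\[
    z_y\,e^{\tau_s z_y}=e^{A_y/\beta}.
\]
Multiplying both sides by \(\tau_s\) yields
\[
    (\tau_s z_y)\,e^{\tau_s z_y}=\tau_s e^{A_y/\beta}.
\]
Because \(\tau_s>0\), the right-hand side is positive. The map \(w\mapsto we^{w}\) is a strictly increasing bijection from \((0,\infty)\) to \((0,\infty)\), and on that range it is inverted by the principal branch \(W_0\). Hence the solution is unique and equals
\[
    \tau_s z_y=W_0\big(\tau_s e^{A_y/\beta}\bigr),
\]
which is exactly Eq.~\eqref{eq:mopo-lambert-sentence} after multiplying by \(q_y/\tau_s\).

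The requirement \(\sum_y\pi^\star(y\mid x)=1\) is simply the primal feasibility condition of the Lagrangian, so \(\tau_s\) must be the value that enforces it. For consistency, note that each term \(\tau^{-1}W_0(\tau c_y)\) is continuous and strictly decreasing in \(\tau>0\), since \(W_0(s)/s\) is decreasing. Its limit as \(\tau\to0^+\) is \(c_y\). So whenever \(\sum_y q_y e^{A_y/\beta}>1\), there is exactly one \(\tau_s>0\) satisfying the normalization. This matches the interpretation that \(\tau_s>0\) is the regime being assumed, and the statement only asks for the characterization given \(\tau_s>0\).

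For the limit, we use \(W_0(s)=s-s^2+O(s^3)\) as \(s\to0\). This gives
\[
    \tau^{-1}W_0\big(\tau e^{A_y/\beta}\bigr)=e^{A_y/\beta}\big(1-\tau e^{A_y/\beta}+O(\tau^2)\big)\to e^{A_y/\beta}.
\]
Therefore \(\pi^\star(y\mid x)\propto q_y e^{A_y/\beta}\) in the limit, with normalization understood after the limit.

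The main obstacle is keeping the limiting statement honest. As \(\tau_s\to0\), the normalization constraint no longer pins \(\tau_s\) down, so the limit should be read as a statement about shape: proportionality after renormalizing. For an infinite \(\mathcal Y(x)\), interchanging the limit with the sum requires a dominated-convergence argument. The bound \(W_0(s)\le s\) supplies the dominating function \(q_y e^{A_y/\beta}\), whose sum is finite under bounded \(A\).
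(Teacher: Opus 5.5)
Your proposal is correct and follows essentially the same route as the paper: a Lagrangian for the normalization constraint, then the scalar stationarity equation $\log\rho+\tau_s\rho=A/\beta$, inversion via $W_0$, and the small-argument expansion of $W_0$ for the limit. You parametrize by $p_y$ with multiplier $\lambda$ while the paper uses $\rho(y)$ with multiplier $\beta\tau_s$; these coincide because $\sum_y p_y=\sum_y \pi_{\mathrm{old}}(y\mid x)\rho(y)$. Your extra remarks go slightly beyond the paper: why $W_0$ is the right branch, existence and uniqueness of $\tau_s>0$ when $\sum_y \pi_{\mathrm{old}}(y\mid x)e^{A(x,y)/\beta}>1$, and dominated convergence for infinite $\mathcal Y(x)$. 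For the existence claim you should also note that $\tau^{-1}W_0(\tau c)\to 0$ as $\tau\to\infty$.
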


This proposition is the main lens through which we analyze ratio-free
off-policy objectives. It shows that, once the advantage \(A\) is fixed,
the objective does not generally target the usual exponential
KL-regularized policy. Instead, it induces a Lambert-tempered target whose
shape is controlled by the normalization multiplier \(\tau_s\). Before
studying the regimes of this multiplier, we first show that this objective
is closely connected to optimal advantage regression losses \cite{brantley2026accelerating, llmscanlearntoreasonfromoffpolicydata}.

\paragraph{Connection to $A^\star$-PO and OAPL.}
These losses can be viewed as a squared regression objective on the sentence-level
log-ratio between the learned policy and the behavior policy. To see this,
define
\[
    \ell_\theta(y_i\mid x)
    =
    \log
    \frac{
        \pi_\theta(y_i\mid x)
    }{
        \pi_{\mathrm{old}}(y_i\mid x)
    }.
\]
For a fixed sample \(y_i\) and a fixed advantage \(\widehat A_i\), the
corresponding term in our objective satisfies
\begin{align*}
    \widehat A_i
    \log \pi_\theta(y_i\mid x)
    -
    \frac{\beta}{2}
    \left(
        \log
        \frac{
            \pi_\theta(y_i\mid x)
        }{
            \pi_{\mathrm{old}}(y_i\mid x)
        }
    \right)^2
    &=
    -
    \frac{1}{2\beta}
    \left(
        \beta \ell_\theta(y_i\mid x)
        -
        \widehat A_i
    \right)^2
    +
    C_i,
\end{align*}
where \(C_i\) do not depend on \(\theta\). Therefore,
maximizing our regularized weighted log-likelihood objective is
equivalent, up to constants and a positive rescaling, to minimizing
\[
    \mathbb E_{y_{1:G}\sim\pi_{\mathrm{old}}(\cdot\mid x)}
    \left[
        \frac{1}{G}
        \sum_{i=1}^G
        \left(
            \beta
            \log
            \frac{
                \pi_\theta(y_i\mid x)
            }{
                \pi_{\mathrm{old}}(y_i\mid x)
            }
            -
            \widehat A_i
        \right)^2
    \right].
\]
This is precisely the $A^\star$-PO / OAPL form when \(\widehat A_i\) is chosen as the
log-sum-exp centered advantage
\[
    \widehat A_i^\beta
    =
    r_i
    -
    \beta
    \log
    \left(
        \frac{1}{G}
        \sum_{j=1}^G
        \exp(r_j/\beta)
    \right).
\]

Thus, OAPL is not a separate objective from the one analyzed above; it is
a particular instantiation obtained by a specific choice of advantage.
Consequently, Proposition~\ref{prop:mopo-lambert-sentence} characterizes
the target policy induced by OAPL as well.

\section{Practical Off-Policy Optimization is Pessimistic}
\label{sec:pess}

\subsection{Analysis of the objective}

Our previous proposition shows that the Lambert target policy is governed by the
density ratio $\rho_s^\star(y\mid x)
    =
        \pi^\star(y\mid x)/
        \pi_{\mathrm{old}}(y\mid x)$.
At the optimal solution, this ratio satisfies
\begin{align}
    \log \rho_s^\star(y\mid x)
    +
    \tau_s \rho_s^\star(y\mid x)
    =
    \frac{A(x,y)}{\beta}.
    \label{eq:rho-stationarity}
\end{align}
Thus \(\tau_s\) is not merely a normalization constant: it determines the
shape of the target policy. The sign of \(\tau_s\) is controlled by the
normalization of the exponential tilt. Define
\[
    Z_{\exp}(x)
    =
    \sum_{y\in\mathcal Y(x)}
    \pi_{\mathrm{old}}(y\mid x)
    \exp\left(\frac{A(x,y)}{\beta}\right).
\]
Then, on the principal branch,
\[
    \tau_s > 0
    \quad\Longleftrightarrow\quad
    Z_{\exp}(x)>1, \qquad
    \tau_s = 0
    \quad\Longleftrightarrow\quad
    Z_{\exp}(x)=1,
\]
and, whenever a real stationary solution exists,
\[
    \tau_s < 0
    \quad\Longleftrightarrow\quad
    Z_{\exp}(x)<1.
\]
This means that the way the advantage $A(x,y)$ is centered \emph{directly determines}
which regime the target policy lies in. In particular, adding a
context-dependent shift \(b(x)\) to the advantage rescales the exponential
mass by \(e^{-b(x)/\beta}\). Unlike in standard policy-gradient
estimators, where such shifts can reduce variance without changing the objective, such shifts are not innocuous here: they change the target
distribution optimized by the off-policy log-likelihood objective.

\paragraph{The pessimistic regime: \(\tau_s>0\).}
When \(\tau_s>0\), the Lambert target is more conservative than the
exponential, KL-regularized target. Indeed, for every response \(y\),
\[
    \rho_s^\star(y\mid x)
    =
    \frac{1}{\tau_s}
    W_0\left(
        \tau_s
        \exp\left(\frac{A(x,y)}{\beta}\right)
    \right)
    \le
    \exp\left(\frac{A(x,y)}{\beta}\right),
\]
since \(W_0(z)\le z\) for \(z\ge 0\). Moreover, \(W_0(z)\) is linear near
the origin:
\[
    W_0(z) = z + O(z^2)
    \quad\text{as } z\to 0,
\]
and grows only logarithmically for large \(z\)
\[
    W_0(z)
    =
    \log z - \log\log z + o(1)
    \quad\text{as } z\to\infty.
\]
Therefore low-advantage responses are treated similarly to the
exponential update, whereas very high-advantage responses are strongly
tempered. This is precisely the desired pessimistic behavior: the policy
does not over-commit exponentially to a small number of high-reward
off-policy samples, which helps protect against over-optimization and
entropy collapse \cite{huang2025correcting}. 

The special case \(\tau_s=1\) gives the particularly simple target
\[
    \rho_s^\star(y\mid x)
    =
    W_0\left(
        \exp\left(\frac{A(x,y)}{\beta}\right)
    \right).
\]

For large positive advantages, this ratio grows approximately linearly in
\(A(x,y)/\beta\), rather than exponentially. Thus \(\tau_s=1\) implements a
simple form of pessimism \cite{jin2021pessimism}: it preserves the ordering induced by the advantage while
substantially reducing the incentive to collapse onto the largest
estimated rewards. This policy was demonstrated to be minimax for offline preference learning \cite{huang2025correcting}.

\paragraph{The exponential boundary: \(\tau_s=0\).}
The case \(\tau_s=0\) is the boundary between pessimistic and optimistic
targets. Eq.~\eqref{eq:rho-stationarity} reduces to
\[
    \rho_s^\star(y\mid x)
    =
    \exp\left(\frac{A(x,y)}{\beta}\right).
\]
Thus the stationary policy is the usual exponential-tilting target
associated with KL-regularized policy optimization,
\[
    \pi^\star(y\mid x)
    \propto
    \pi_{\mathrm{old}}(y\mid x)
    \exp\left(\frac{A(x,y)}{\beta}\right).
\]
However, in the present objective, reaching exactly this regime requires
the exponential tilt to be correctly normalized at the population level:
\[
    \sum_y
    \pi_{\mathrm{old}}(y\mid x)
    \exp\left(\frac{A(x,y)}{\beta}\right)
    =
    1.
\]
This makes the \(\tau_s=0\) target delicate. Small errors in the
advantage baseline can move the population objective into either the
pessimistic regime \(\tau_s>0\) or the unstable regime \(\tau_s<0\).
Moreover, the exponential target is highly sensitive to large positive
advantage estimates, since its density ratio grows as
\(\exp(A(x,y)/\beta)\). This sensitivity is one reason why directly
targeting the KL-regularized update can be brittle in large-scale
off-policy training.

\paragraph{The unstable regime: \(\tau_s<0\).}
When \(\tau_s<0\), write \(\tau_s=-|\tau_s|\). The stationarity equation
becomes
\begin{equation}
    \log \rho_s^\star(y\mid x)+\tau_s\rho_s^\star(y\mid x)
    =
    \frac{A(x,y)}{\beta}.
    \label{eq:rho-stationarity_uns}
\end{equation}
Equivalently,
\[
    \rho_s^\star(y\mid x)
    =
    \exp\left(\frac{A(x,y)}{\beta}\right)
    \exp\left(
        |\tau_s|\rho_s^\star(y\mid x)
    \right).
\]
Thus, whenever the principal-branch solution exists,
\[
    \rho_s^\star(y\mid x)
    \ge
    \exp\left(\frac{A(x,y)}{\beta}\right).
\]
The target is therefore more aggressive than the exponential update: it
amplifies high-advantage responses even more strongly than the
KL-regularized target. This regime is also numerically fragile. From
Equation~\eqref{eq:rho-stationarity_uns},
\[
    \frac{
        \partial \rho_s^\star(y\mid x)
    }{
        \partial (A(x,y)/\beta)
    }
    =
    \frac{
        \rho_s^\star(y\mid x)
    }{
        1+\tau_s \rho_s^\star(y\mid x)
    }.
\]
For \(\tau_s<0\), the denominator can approach zero, causing the target
ratio to become extremely sensitive to small errors in the advantage. This makes
\(\tau_s<0\) a regime to avoid: it is more peaked than the exponential
target, more sensitive to advantage noise, and more prone to rapid
entropy collapse.

\subsection{The nominal advantage regression target is in the unstable regime}

As discussed above, $A^\star$-PO and OAPL instantiate the regularized off-policy objective with a specific choice of advantage. The previous analysis shows that advantage normalization strongly influences the induced target policy, and is therefore a target-design choice.  OAPL uses the log-sum-exp centered advantage
\[
    \widehat A_i^\beta
    =
    r_i
    -
    \beta
    \log
    \left(
        \frac{1}{G}
        \sum_{j=1}^G
        \exp(r_j/\beta)
    \right)\,, \quad  A^\beta(x,y) = \mathbb E_{\pi_{\mathrm{old}}}[
        \widehat A_i^\beta
        \mid y_i=y,x
    ]
\]
This gives by Jensen's inequality:
\begin{align*}
    \mathbb E_{y\sim\pi_{\mathrm{old}}(\cdot\mid x)}
    \left[
        \exp\left(
            \frac{A^\beta(x,y)}{\beta}
        \right)
    \right]
    \le
    \mathbb E
    \left[
        \exp\left(
            \frac{\widehat A_i^\beta}{\beta}
        \right)
    \right]
    =
    1 \implies \tau_s \le 0\,.
\end{align*}

In non-degenerate finite-\(G\) settings, the inequality is \emph{strict}. Hence
the target induced by this biased estimator
lies in the unstable, negative-\(\tau_s\) regime, even though the empirical group is
exponentially normalized.

This explains why the implementation choice of OAPL deviates from the
nominal theory. In the experimental setup, OAPL decouples the temperature
\(\beta_1\) used for regularization from the temperature \(\beta_2\) used
for advantage computation, choosing \(\beta_2 \gg \beta_1\). This choice
makes the advantage closer to a centered group advantage and pushes the induced target back toward the stable positive-\(\tau_s\) regime. This is discussed more in Appendix~\ref{app:oapl_regime}. This implicitly
implements the pessimism needed to stabilize off-policy learning. Rather than relying on this heuristic and an additional hyperparameter, we propose to target the Lambert policy, which was demonstrated to be more stable in RL settings \cite{huang2025correcting} and closer to a $\chi$-square divergence, implementing the principle of pessimism in the face of uncertainty \cite{jin2021pessimism, NEURIPS2024_9379ea6b}.

\subsection{Lambert Policy as a target}

The preceding analysis suggests a simple design principle: instead of
trying to target the fragile exponential boundary \(\tau_s=0\), choose a
positive Lambert multiplier by construction. Specifically, we want to be in the regime
\(\tau_s>1\) to benefit from the induced pessimism. Recall that the Lambert target ratio is
\[
    \rho^\star(y\mid x)
    =
    \frac{\pi^\star(y\mid x)}
    {\pi_{\mathrm{old}}(y\mid x)}
    =
    \frac{1}{\tau_s}
    W_0
    \left(
        \tau_s
        \exp\left(\frac{A(x,y)}{\beta}\right)
    \right),
\]
where \(\tau_s\) is chosen so that $\mathbb E_{y\sim\pi_{\mathrm{old}}(\cdot\mid x)}
    \left[
        \rho^\star(y\mid x)
    \right]
    =
    1.$

We give the simple sufficient condition on the advantage to obtain $\tau_s \ge 1$ in the following proposition.

\begin{proposition}[Shifted mean baseline yields a pessimistic Lambert target]
\label{prop:mean-minus-beta}
Fix a context \(x\) and let \(\beta>0\). For a sentence-level advantage
\(A(x,y)\), define the Lambert mass function
\[
    M_A(\tau)
    =
    \mathbb E_{y\sim\pi_{\mathrm{old}}(\cdot\mid x)}
    \left[
        \frac{1}{\tau}
        W_0
        \left(
            \tau
            \exp\left(\frac{A(x,y)}{\beta}\right)
        \right)
    \right].
\]
Let \(\tau_s\) be the positive normalization multiplier satisfying $M_A(\tau_s)=1$. We have:
\[
    \mathbb E_{y\sim\pi_{\mathrm{old}}(\cdot\mid x)}
    [A(x,y)]
    \ge
    \beta \implies\tau_s\ge 1.
\]
Consequently, the following choice of the advantage induces a Lambert multiplier satisfying $\tau_s\ge 1$. 
\[
    \widehat A_i
    =
    r_i-\bar r+\beta,
\]
\end{proposition}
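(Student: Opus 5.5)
Write $a(y)=A(x,y)/\beta$ and $\rho_\tau(y)=\frac{1}{\tau}W_0(\tau e^{a(y)})$, so that $M_A(\tau)=\mathbb E_{\pi_{\mathrm{old}}}[\rho_\tau(y)]$. The plan has three steps: show that $M_A$ is strictly decreasing in $\tau$, show that $M_A(1)\ge 1$ under the mean condition, and conclude $\tau_s\ge 1$ from $M_A(\tau_s)=1$.

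\textbf{Monotonicity.} For fixed $a$, $\rho=\rho_\tau$ solves $\log\rho+\tau\rho=a$, which is Eq.~\eqref{eq:rho-stationarity}. Differentiating implicitly in $\tau$ gives $\rho'/\rho+\rho+\tau\rho'=0$, hence
\[
\partial_\tau\rho=-\frac{\rho^2}{1+\tau\rho}<0
\]
for $\tau>0$. Therefore $M_A$ is strictly decreasing on $(0,\infty)$. To exchange differentiation and expectation I will either assume $\mathcal Y(x)$ finite, or note that monotonicity holds pointwise, so the expectation is monotone without differentiating at all. It follows that $M_A(1)\ge 1=M_A(\tau_s)$ implies $\tau_s\ge 1$.

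\textbf{Lower bound at $\tau=1$.} Here $\rho_1(y)=W_0(e^{a(y)})=:w(a)$, the solution of $\log w+w=a$.
\begin{itemize}
\item Set $g(w)=w+\log w$, which is increasing and concave. Its inverse $a\mapsto w(a)$ is therefore increasing and convex, since the inverse of an increasing concave function is convex.
\item A direct check: $w'(a)=w/(1+w)$, and $w''(a)=w'/(1+w)^2>0$.
\item By Jensen, $\mathbb E[w(a(y))]\ge w(\mathbb E[a(y)])$.
\item The mean condition $\mathbb E[A]\ge\beta$ means $\mathbb E[a]\ge 1$. Since $w$ is increasing, $w(\mathbb E[a])\ge w(1)$.
\item $w(1)=1$, because $\log 1+1=1$, equivalently $W_0(e)=1$.
\end{itemize}
Hence $M_A(1)\ge 1$, and the first claim follows from monotonicity.

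\textbf{Consequence.} Take $\widehat A_i=r_i-\bar r+\beta$. The induced population advantage $A(x,y)=\mathbb E[\widehat A_i\mid y_i=y]$ has mean over $\pi_{\mathrm{old}}$ equal to $\mathbb E[\widehat A_i]=\mathbb E[r_i]-\mathbb E[\bar r]+\beta=\beta$, by exchangeability of the group. So the condition holds with equality, and $\tau_s\ge 1$.

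The only step needing care is the convexity of $a\mapsto W_0(e^a)$; the computation above settles it. A minor additional point is that $\tau_s$ is taken to be the positive root. Since $M_A$ is strictly decreasing, that root is unique whenever it exists, so the argument does not need to establish existence.
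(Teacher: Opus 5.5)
Your proof is correct and follows the paper's argument. Both reduce to $M_A(1)\ge 1$ via monotonicity of $M_A$ in $\tau$, apply Jensen to the convex map $u\mapsto W_0(e^u)$ (second derivative $w/(1+w)^3$) together with $W_0(e)=1$, and then check that the shifted-mean advantage has $\pi_{\mathrm{old}}$-mean exactly $\beta$. Your implicit-differentiation proof of strict monotonicity, and your use of the monotonicity of $w$ to cover $\mathbb E[A]>\beta$, supply steps the paper only asserts; the paper additionally states the finite-group empirical version $\frac{1}{G}\sum_i W_0\bigl(e^{\widehat A_i/\beta}\bigr)\ge 1$.
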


\section{Experiments}
\label{sec:experiments}

We empirically evaluate whether directly targeting the pessimistic
Lambert regime improves the stability of off-policy
learning for language-model reasoning. Our experiments compare OAPL with
our Lambert objective and stress test them under small regularization strength and large policy lag.

All experiments start from Qwen3-8B~\cite{yang2025qwen3} as the initial policy. We train on
the DeepScaleR dataset~\cite{deepscaler2025}.  For training, we use four H100 GPUs, with one prompt per device, group size \(G=4\), maximum generation length \(T=4096\), and AdamW~\cite{loshchilov2018decoupled} with a learning rate of \(10^{-6}\).

We report evaluation reward as average accuracy on a held-out mixture of
math benchmarks consisting of Minerva~\cite{NEURIPS2022_18abbeef}, AIME2025, AMC23, and BRUMO~\cite{brumo}.
We also report the average generation entropy, which
measures whether the policy maintains diversity or collapses during
training.

For each $\beta > 0$,  we compare two ratio-free off-policy objectives that use the same
regularized log-ratio regression form, but differ in how the advantage is
defined. We compare the following advantages:
\begin{align*}
    & \widehat A_i^{\mathrm{OAPL}}
    =
    r_i
    -
    \beta
    \log
    \left(
        \frac{1}{G}
        \sum_{j=1}^G
        \exp(r_j/\beta)
    \right), \qquad
\widehat A_i^{\mathrm{Lambert}}
    =
    r_i-\bar r+\beta\,.
\end{align*}
The Lambert choice directly targets the pessimistic Lambert regime. As shown in
Proposition~\ref{prop:mean-minus-beta}, it guarantees that the induced
Lambert multiplier satisfies \(\tau_s\ge 1\).

To study off-policy learning, rollouts are generated by a lagged behavior
policy \(\pi_{\mathrm{old}}\), while updates are applied to the current
policy \(\pi_\theta\). The policy lag \(L\) denotes the number of
optimization steps between the behavior policy used to generate the data
and the current policy being updated. Larger \(L\) therefore corresponds
to more stale, more off-policy data. 

First, we set $L = 32$, and we study sensitivity to the regularization strength. We compare
OAPL and Lambert for $\beta \in \{10^{-1},10^{-2},10^{-3}\}$.
Figure~\ref{fig:experiments} (Top) reports evaluation reward and entropy during
training. Lambert remains stable across all three values of \(\beta\),
whereas OAPL becomes unstable for small \(\beta\). In particular, when
the regularization is weak, OAPL exhibits rapid entropy collapse and
degraded reward, while Lambert maintains higher entropy and continues to
improve. This supports the theoretical prediction that the Lambert target
is less sensitive to aggressive regularization because large advantages
are tempered rather than amplified exponentially.

Second, we study robustness to stale rollouts by varying the policy lag $L \in \{32,128,256\}$. Figure~\ref{fig:experiments} (Bottom) reports evaluation reward and entropy for
each lag. Lambert is consistently more stable as the lag increases. It
maintains reasonable entropy and converges to higher reward, while OAPL
becomes highly sensitive to stale data. Depending on the lag, OAPL either
collapses to low entropy or exhibits unstable entropy growth, both of
which are reflected in degraded or unstable reward curves.

The Lambert advantage makes off-policy learning more robust. The improvement is pronounced in the regimes predicted by our analysis to be difficult
for off-policy learning: small \(\beta\), where exponential targets
become sharp, and large policy lag, $\beta$ small is necessary to allow learning. These results support
our main claim that targeting the pessimistic Lambert regime stabilizes
off-policy learning for LLM reasoning.



\begin{figure}[t]
    \centering
    \includegraphics[width=0.77\linewidth]{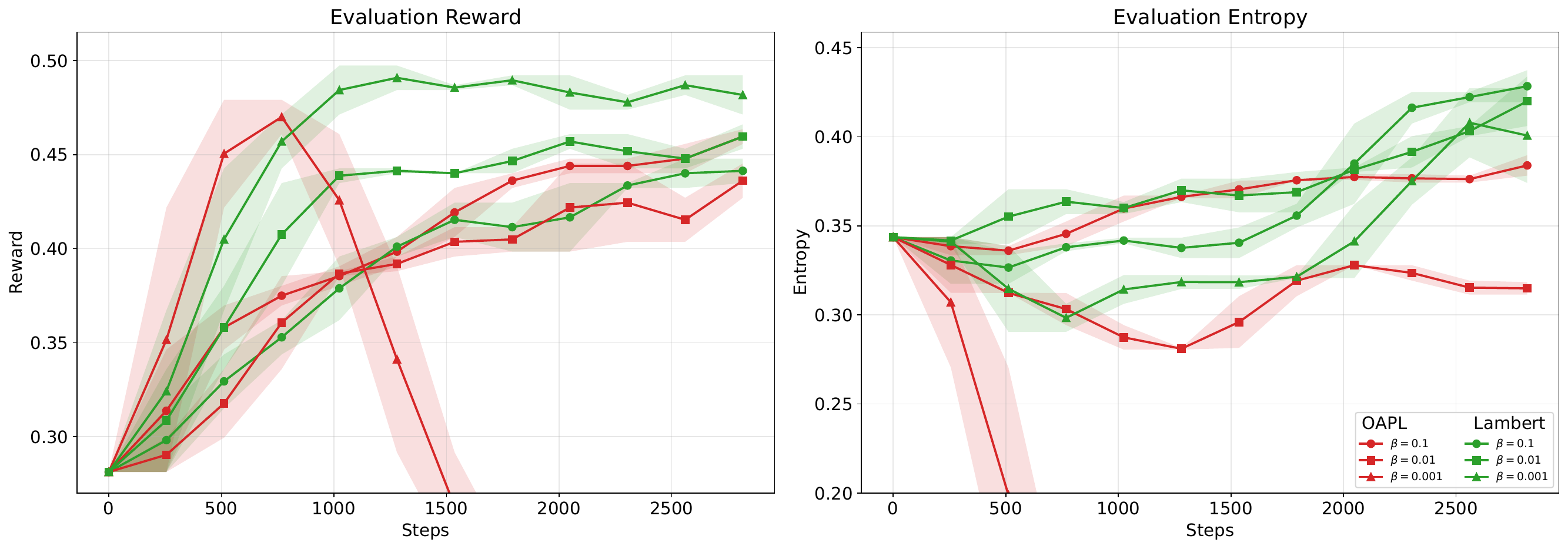}
    \vspace{-0.3em}

    \includegraphics[width=0.77\linewidth]{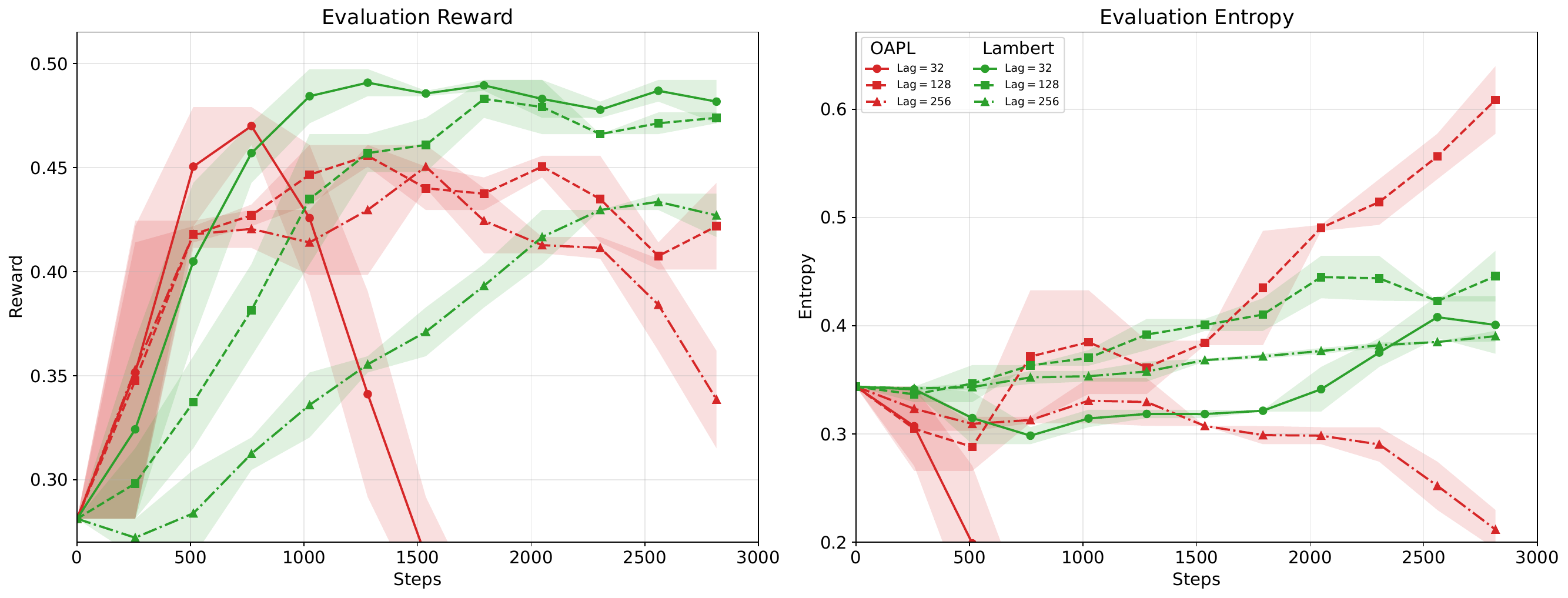}
    \vspace{-0.5em}
    \caption{\textbf{Lambert improves off-policy stability.}
    Top: Lambert remains stable while OAPL
    collapses for small \(\beta\). Bottom: Lambert maintains higher
    reward and stable entropy under stale rollouts, while OAPL becomes unstable.}
    \label{fig:experiments}
    \vspace{-1.2em}
\end{figure}



\section{Conclusion}
\label{sec:conclusion}

By characterizing regularized, ratio-free off-policy objectives as inducing Lambert-tempered policies, we showed that advantage normalization is not a benign baseline choice but a mechanism that determines whether the update is pessimistic, exponential, or unstable. This lens explains both the empirical strength and the fragility of optimal advantage regression methods: their nominal log-sum-exp advantage can induce an aggressive population target, while practical temperature decoupling implicitly restores a more conservative update. Building on this understanding, we proposed a shifted-mean advantage that targets the pessimistic Lambert regime by construction, improving robustness to regularization strength and policy lag in our experiments. We view token-level variants of the theory as a promising direction for making off-policy reasoning algorithms more principled and reliable.

\section{Acknowledgment}
\label{sec:ack}
This project was provided with computational AI and storage
resources by GENCI at IDRIS thanks to the grant 2025-A0191016862 on the supercomputer Jean Zay
(V100/A100/H100 partitions).

\bibliographystyle{unsrt}
\bibliography{reference}

@inproceedings{
huang2025correcting,
title={Correcting the Mythos of {KL}-Regularization: Direct Alignment without Overoptimization via Chi-Squared Preference Optimization},
author={Audrey Huang and Wenhao Zhan and Tengyang Xie and Jason D. Lee and Wen Sun and Akshay Krishnamurthy and Dylan J Foster},
booktitle={The Thirteenth International Conference on Learning Representations},
year={2025},
url={https://openreview.net/forum?id=hXm0Wu2U9K}
}

@inproceedings{
brantley2026accelerating,
title={Accelerating {RL} for {LLM} Reasoning with Optimal Advantage Regression},
author={Kiant{\'e} Brantley and Mingyu Chen and Zhaolin Gao and Jason D. Lee and Wen Sun and Wenhao Zhan and Xuezhou Zhang},
booktitle={The Thirty-ninth Annual Conference on Neural Information Processing Systems},
year={2026},
url={https://openreview.net/forum?id=T1V8BJO0iG}
}

@misc{llmscanlearntoreasonfromoffpolicydata,
      title={LLMs Can Learn to Reason From Off-Policy Data}, 
      author={Daniel Ritter and Owen Oertell and Bradley Guo and Jonathan Chang and Kianté Brantley and Wen Sun},
      year={2026},
      eprint={2505.20686},
      archivePrefix={arXiv},
      primaryClass={cs.LG},
      url={https://arxiv.org/abs/2505.20686}, 
}

@inproceedings{
roirl,
title={Roi{RL}: Efficient, Self-Supervised Reasoning with Offline Iterative Reinforcement Learning},
author={Aleksei Arzhantsev and Otmane Sakhi and Flavian Vasile},
booktitle={NeurIPS 2025 Workshop on Efficient Reasoning},
year={2025},
url={https://openreview.net/forum?id=PeJ1eGGygZ}
}

@inproceedings{jin2021pessimism,
  title={Is pessimism provably efficient for offline rl?},
  author={Jin, Ying and Yang, Zhuoran and Wang, Zhaoran},
  booktitle={International Conference on Machine Learning},
  pages={5084--5096},
  year={2021},
  organization={PMLR}
}

@inproceedings{NEURIPS2024_9379ea6b,
 author = {Sakhi, Otmane and Aouali, Imad and Alquier, Pierre and Chopin, Nicolas},
 booktitle = {Advances in Neural Information Processing Systems},
 doi = {10.52202/079017-2566},
 editor = {A. Globerson and L. Mackey and D. Belgrave and A. Fan and U. Paquet and J. Tomczak and C. Zhang},
 pages = {80706--80755},
 publisher = {Curran Associates, Inc.},
 title = {Logarithmic Smoothing for Pessimistic Off-Policy Evaluation, Selection and Learning},
 url = {https://proceedings.neurips.cc/paper_files/paper/2024/file/9379ea6ba7a61a402c7750833848b99f-Paper-Conference.pdf},
 volume = {37},
 year = {2024}
}

@misc{deepseek-math,
  author = {Zhihong Shao and Peiyi Wang and Qihao Zhu and Runxin Xu and Junxiao Song and Mingchuan Zhang and Y. K. Li and Y Wu and Daya Guo},
  title = {DeepSeekMath: Pushing the Limits of Mathematical Reasoning in Open Language Models},
  journal = {CoRR},
  volume = {abs/2402.03300},
  year = {2024},
  url = {https://arxiv.org/abs/2402.03300},
}

@misc{openai2026openaio1card,
      title={OpenAI o1 System Card}, 
      author={OpenAI},
      year={2026},
      eprint={2412.16720},
      archivePrefix={arXiv},
      primaryClass={cs.AI},
      url={https://arxiv.org/abs/2412.16720}, 
}

@misc{deepseekai2025,
      title={DeepSeek-R1: Incentivizing Reasoning Capability in LLMs via Reinforcement Learning}, 
      author={DeepSeek-AI},
      year={2025},
      eprint={2501.12948},
      archivePrefix={arXiv},
      primaryClass={cs.CL},
      url={https://arxiv.org/abs/2501.12948}, 
}

@inproceedings{cql,
author = {Kumar, Aviral and Zhou, Aurick and Tucker, George and Levine, Sergey},
title = {Conservative Q-learning for offline reinforcement learning},
year = {2020},
isbn = {9781713829546},
publisher = {Curran Associates Inc.},
address = {Red Hook, NY, USA},
booktitle = {Proceedings of the 34th International Conference on Neural Information Processing Systems},
articleno = {100},
numpages = {13},
location = {Vancouver, BC, Canada},
series = {NIPS '20}
}

@inproceedings{
noukhovitch2025faster,
title={Faster, More Efficient {RLHF} through Off-Policy Asynchronous Learning},
author={Michael Noukhovitch and Shengyi Huang and Sophie Xhonneux and Arian Hosseini and Rishabh Agarwal and Aaron Courville},
booktitle={The Thirteenth International Conference on Learning Representations},
year={2025},
url={https://openreview.net/forum?id=FhTAG591Ve}
}

@inproceedings{
yu2026dapo,
title={{DAPO}: An Open-Source {LLM} Reinforcement Learning System at Scale},
author={Qiying Yu and Zheng Zhang and Ruofei Zhu and Yufeng Yuan and Xiaochen Zuo and YuYue and Weinan Dai and Tiantian Fan and Gaohong Liu and Juncai Liu and LingJun Liu and Xin Liu and Haibin Lin and Zhiqi Lin and Bole Ma and Guangming Sheng and Yuxuan Tong and Chi Zhang and Mofan Zhang and Ru Zhang and Wang Zhang and Hang Zhu and Jinhua Zhu and Jiaze Chen and Jiangjie Chen and Chengyi Wang and Hongli Yu and Yuxuan Song and Xiangpeng Wei and Hao Zhou and Jingjing Liu and Wei-Ying Ma and Ya-Qin Zhang and Lin Yan and Yonghui Wu and Mingxuan Wang},
booktitle={The Thirty-ninth Annual Conference on Neural Information Processing Systems},
year={2026},
url={https://openreview.net/forum?id=2a36EMSSTp}
}

@inproceedings{
rafailov2023direct,
title={Direct Preference Optimization: Your Language Model is Secretly a Reward Model},
author={Rafael Rafailov and Archit Sharma and Eric Mitchell and Christopher D Manning and Stefano Ermon and Chelsea Finn},
booktitle={Thirty-seventh Conference on Neural Information Processing Systems},
year={2023},
url={https://openreview.net/forum?id=HPuSIXJaa9}
}

@inproceedings{
loshchilov2018decoupled,
title={Decoupled Weight Decay Regularization},
author={Ilya Loshchilov and Frank Hutter},
booktitle={International Conference on Learning Representations},
year={2019},
url={https://openreview.net/forum?id=Bkg6RiCqY7},
}

@article{brumo,
      title={Beyond Benchmarks: MathArena as an Evaluation Platform for Mathematics with LLMs}, 
      author={Jasper Dekoninck and Nikola Jovanović and Tim Gehrunger and Kári Rögnvalddson and Ivo Petrov and Chenhao Sun and Martin Vechev},
      year={2026},
      eprint={2605.00674},
      archivePrefix={arXiv},
      primaryClass={cs.CL},
      url={https://arxiv.org/abs/2605.00674}, 
}

@misc{ppo,
      title={Proximal Policy Optimization Algorithms}, 
      author={John Schulman and Filip Wolski and Prafulla Dhariwal and Alec Radford and Oleg Klimov},
      year={2017},
      eprint={1707.06347},
      archivePrefix={arXiv},
      primaryClass={cs.LG},
      url={https://arxiv.org/abs/1707.06347}, 
}

@article{yang2025qwen3,
  title={{Qwen3} Technical Report},
  author={Yang, An and Li, Anfeng and Yang, Baosong and others},
  journal={arXiv preprint arXiv:2505.09388},
  year={2025},
  url={https://arxiv.org/abs/2505.09388}
}

@misc{deepscaler2025,
  title={DeepScaleR: Surpassing O1-Preview with a 1.5B Model by Scaling RL},
  author={Michael Luo and Sijun Tan and Justin Wong and Xiaoxiang Shi and William Y. Tang and Manan Roongta and Colin Cai and Jeffrey Luo and Li Erran Li and Raluca Ada Popa and Ion Stoica},
  howpublished={https://pretty-radio-b75.notion.site/DeepScaleR-Surpassing-O1-Preview-with-a-1-5B-Model-by-Scaling-RL-19681902c1468005bed8ca303013a4e2},
  note={Notion Blog},
  year={2025}
}

@inproceedings{NEURIPS2022_18abbeef,
 author = {Lewkowycz, Aitor and Andreassen, Anders and Dohan, David and Dyer, Ethan and Michalewski, Henryk and Ramasesh, Vinay and Slone, Ambrose and Anil, Cem and Schlag, Imanol and Gutman-Solo, Theo and Wu, Yuhuai and Neyshabur, Behnam and Gur-Ari, Guy and Misra, Vedant},
 booktitle = {Advances in Neural Information Processing Systems},
 editor = {S. Koyejo and S. Mohamed and A. Agarwal and D. Belgrave and K. Cho and A. Oh},
 pages = {3843--3857},
 publisher = {Curran Associates, Inc.},
 title = {Solving Quantitative Reasoning Problems with Language Models},
 url = {https://proceedings.neurips.cc/paper_files/paper/2022/file/18abbeef8cfe9203fdf9053c9c4fe191-Paper-Conference.pdf},
 volume = {35},
 year = {2022}
}

@misc{
awc,
title={Advantage-Weighted Regression: Simple and Scalable Off-Policy Reinforcement Learning},
author={Xue Bin Peng and Aviral Kumar and Grace Zhang and Sergey Levine},
year={2021},
url={https://openreview.net/forum?id=ToWi1RjuEr8}
}

@misc{kimi,
      title={Kimi K2: Open Agentic Intelligence}, 
      author={Kimi Team},
      year={2026},
      eprint={2507.20534},
      archivePrefix={arXiv},
      primaryClass={cs.LG},
      url={https://arxiv.org/abs/2507.20534}, 
}

@InProceedings{pmlr-v37-schulman15,
  title = 	 {Trust Region Policy Optimization},
  author = 	 {Schulman, John and Levine, Sergey and Abbeel, Pieter and Jordan, Michael and Moritz, Philipp},
  booktitle = 	 {Proceedings of the 32nd International Conference on Machine Learning},
  pages = 	 {1889--1897},
  year = 	 {2015},
  editor = 	 {Bach, Francis and Blei, David},
  volume = 	 {37},
  series = 	 {Proceedings of Machine Learning Research},
  address = 	 {Lille, France},
  month = 	 {07--09 Jul},
  publisher =    {PMLR},
  url = 	 {https://proceedings.mlr.press/v37/schulman15.html}
}

\newpage
\appendix

\section{Societal Impact}
\label{sec:impact}

This work studies learning objectives for reinforcement learning in large language models. Its primary contribution is methodological: by characterizing and controlling the implicit pessimism induced by off-policy objectives, it aims to make RL training more stable and less prone to entropy collapse. More stable training may reduce computational waste and make post-training pipelines easier to reproduce and debug. At the same time, improvements in RL for reasoning can strengthen the capabilities of language models, which may have both beneficial and harmful downstream uses. Better reasoning systems can support education, scientific assistance, programming, and mathematical problem solving, but they may also improve the effectiveness of systems used for misinformation, cyber misuse, or other harmful applications. We therefore view the societal impact as indirect, mediated by how improved RL methods are incorporated into future language-model training pipelines. Responsible deployment should pair such optimization methods with standard safeguards, evaluation, and monitoring appropriate to the intended application.

\section{Proofs}

\subsection{Proof of Proposition~\ref{prop:mopo-lambert-sentence}}

\begin{proof}
Fix \(x\). For readability, we omit the conditioning on \(x\) when there
is no ambiguity, and write
\[
    \rho(y)
    =
    \frac{\pi(y\mid x)}{\pi_{\mathrm{old}}(y\mid x)}.
\]
We assume \(\pi_{\mathrm{old}}(y\mid x)>0\) on \(\mathcal Y(x)\), or
equivalently restrict \(\mathcal Y(x)\) to the support of
\(\pi_{\mathrm{old}}(\cdot\mid x)\). Since
\[
    \pi(y\mid x)
    =
    \pi_{\mathrm{old}}(y\mid x)\rho(y),
\]
the normalization constraint \(\sum_y \pi(y\mid x)=1\) becomes
\[
    \sum_{y\in\mathcal Y(x)}
    \pi_{\mathrm{old}}(y\mid x)\rho(y)
    =
    1.
\]
Moreover,
\[
    \log \pi(y\mid x)
    =
    \log \pi_{\mathrm{old}}(y\mid x)
    +
    \log \rho(y).
\]
Therefore, up to a constant independent of \(\rho\), the objective can be
written as
\[
    \mathcal J_s(\rho)
    =
    \sum_{y\in\mathcal Y(x)}
    \pi_{\mathrm{old}}(y\mid x)
    \left[
        A(x,y)\log \rho(y)
        -
        \frac{\beta}{2}
        \left(\log \rho(y)\right)^2
    \right].
\]

Let \(\pi^\star\) be an interior stationary point and let
\[
    \rho^\star(y)
    =
    \frac{\pi^\star(y\mid x)}
    {\pi_{\mathrm{old}}(y\mid x)}.
\]
Since the point is interior, \(\rho^\star(y)>0\) for all
\(y\in\mathcal Y(x)\). Introduce a Lagrange multiplier
\(\beta\tau_s\) for the normalization constraint and consider
\[
    \mathcal L(\rho,\tau_s)
    =
    \sum_y
    \pi_{\mathrm{old}}(y\mid x)
    \left[
        A(x,y)\log \rho(y)
        -
        \frac{\beta}{2}
        \left(\log \rho(y)\right)^2
    \right]
    -
    \beta\tau_s
    \left(
        \sum_y
        \pi_{\mathrm{old}}(y\mid x)\rho(y)
        -
        1
    \right).
\]
The first-order condition with respect to \(\rho(y)\) gives
\[
    \pi_{\mathrm{old}}(y\mid x)
    \left[
        \frac{A(x,y)}{\rho(y)}
        -
        \frac{\beta\log\rho(y)}{\rho(y)}
        -
        \beta\tau_s
    \right]
    =
    0.
\]
Since \(\pi_{\mathrm{old}}(y\mid x)>0\), this is equivalent to
\[
    A(x,y)
    -
    \beta\log\rho^\star(y)
    -
    \beta\tau_s\rho^\star(y)
    =
    0.
\]
Dividing by \(\beta\), we obtain the stationarity equation
\[
    \log\rho^\star(y)
    +
    \tau_s\rho^\star(y)
    =
    \frac{A(x,y)}{\beta}.
\]
Exponentiating both sides yields
\[
    \rho^\star(y)
    \exp\left(\tau_s\rho^\star(y)\right)
    =
    \exp\left(\frac{A(x,y)}{\beta}\right).
\]
If \(\tau_s>0\), multiplying by \(\tau_s\) gives
\[
    \tau_s\rho^\star(y)
    \exp\left(\tau_s\rho^\star(y)\right)
    =
    \tau_s
    \exp\left(\frac{A(x,y)}{\beta}\right).
\]
Applying the principal branch \(W_0\) of the Lambert \(W\) function,
defined by \(W_0(z)e^{W_0(z)}=z\), gives
\[
    \tau_s\rho^\star(y)
    =
    W_0
    \left(
        \tau_s
        \exp\left(\frac{A(x,y)}{\beta}\right)
    \right).
\]
Thus
\[
    \rho^\star(y)
    =
    \frac{1}{\tau_s}
    W_0
    \left(
        \tau_s
        \exp\left(\frac{A(x,y)}{\beta}\right)
    \right).
\]
Multiplying by \(\pi_{\mathrm{old}}(y\mid x)\), we obtain
\[
    \pi^\star(y\mid x)
    =
    \frac{
        \pi_{\mathrm{old}}(y\mid x)
    }{
        \tau_s
    }
    W_0
    \left(
        \tau_s
        \exp\left(\frac{A(x,y)}{\beta}\right)
    \right).
\]
The multiplier \(\tau_s\) is chosen so that the distribution is
normalized:
\[
    \sum_{y\in\mathcal Y(x)}
    \pi^\star(y\mid x)
    =
    1.
\]

Finally, consider the limit \(\tau_s\to 0\). Since
\(W_0(z)=z+o(z)\) as \(z\to 0\), we have
\[
    \frac{1}{\tau_s}
    W_0
    \left(
        \tau_s
        \exp\left(\frac{A(x,y)}{\beta}\right)
    \right)
    \longrightarrow
    \exp\left(\frac{A(x,y)}{\beta}\right).
\]
Therefore the limiting density ratio is
\[
    \rho^\star(y)
    =
    \exp\left(\frac{A(x,y)}{\beta}\right),
\]
and hence the limiting policy has the exponential-tilting form
\[
    \pi^\star(y\mid x)
    \propto
    \pi_{\mathrm{old}}(y\mid x)
    \exp\left(\frac{A(x,y)}{\beta}\right).
\]
This proves the claim.
\end{proof}

\subsection{Proof of Proposition~\ref{prop:mean-minus-beta}}

\begin{proof}

Define
\[
    M(\tau)
    =
    \mathbb E_{y\sim\pi_{\mathrm{old}}(\cdot\mid x)}
    \left[
        \frac{1}{\tau}
        W_0
        \left(
            \tau
            \exp\left(\frac{A(x,y)}{\beta}\right)
        \right)
    \right].
\]
For \(\tau>0\), \(M(\tau)\) is decreasing in \(\tau\). Therefore, to
guarantee \(\tau_s\ge 1\), it is enough to ensure
\[
    M(1)\ge 1,
\]
that is,
\[
    \mathbb E_{y\sim\pi_{\mathrm{old}}(\cdot\mid x)}
    \left[
        W_0
        \left(
            \exp\left(\frac{A(x,y)}{\beta}\right)
        \right)
    \right]
    \ge 1.
\]

Let \(r(y)\) be the raw reward and choose the shifted empirical mean baseline with $G$ completions from $\pi_{\mathrm{old}}$:
\[
    b(x)
    =
    \bar r_G
    -
    \beta.
\]
Then
\[
    A(x,y)
    =
    r(y)-b(x)
    =
    r(y)
    -
    \mathbb E_{\pi_{\mathrm{old}}}[r(y)]
    +
    \beta,
\]
so
\[
    \mathbb E_{\pi_{\mathrm{old}}}[A(x,y)]
    =
    \beta.
\]
Now define
\[
    f(u)=W_0(e^u).
\]
This function is convex, since if \(w=f(u)\), then
\[
    f''(u)
    =
    \frac{w}{(1+w)^3}
    >
    0.
\]
Therefore, by Jensen's inequality,
\begin{align*}
    \mathbb E_{\pi_{\mathrm{old}}}
    \left[
        W_0
        \left(
            \exp\left(\frac{A(x,y)}{\beta}\right)
        \right)
    \right]
    &=
    \mathbb E_{\pi_{\mathrm{old}}}
    \left[
        f\left(\frac{A(x,y)}{\beta}\right)
    \right]
    \\
    &\ge
    f\left(
        \mathbb E_{\pi_{\mathrm{old}}}
        \left[
            \frac{A(x,y)}{\beta}
        \right]
    \right)
    \\
    &=
    f(1)
    =
    W_0(e)
    =
    1.
\end{align*}
Hence \(M(1)\ge 1\), and therefore
\[
    \tau_s\ge 1.
\]

In practice, with a sampled group \(r_1,\dots,r_G\), we use
\[
    \bar r
    =
    \frac{1}{G}\sum_{i=1}^G r_i,
    \qquad
    \widehat b(x)
    =
    \bar r-\beta.
\]
This gives the empirical advantage
\[
    \widehat A_i
    =
    r_i-\widehat b(x)
    =
    r_i-\bar r+\beta.
\]
Since
\[
    \frac{1}{G}\sum_{i=1}^G \widehat A_i
    =
    \beta,
\]
the same Jensen argument gives
\begin{align*}
    \frac{1}{G}
    \sum_{i=1}^G
    W_0
    \left(
        \exp\left(\frac{\widehat A_i}{\beta}\right)
    \right)
    &\ge
    W_0
    \left(
        \exp
        \left(
            \frac{1}{G}
            \sum_{i=1}^G
            \frac{\widehat A_i}{\beta}
        \right)
    \right)
    \\
    &=
    W_0(e)
    =
    1.
\end{align*}
Thus the empirical target satisfies
\[
    \widehat \tau_s\ge 1.
\]

For the population objective with the group baseline, the
effective advantage is
\[
    A_G(x,y)
    =
    \mathbb E
    \left[
        r_i-\bar r+\beta
        \mid y_i=y,x
    \right]
    =
    \frac{G-1}{G}
    \left(
        r(y)-V_{\mathrm{old}}(x)
    \right)
    +
    \beta,
\]
where
\[
    V_{\mathrm{old}}(x)
    =
    \mathbb E_{y\sim\pi_{\mathrm{old}}(\cdot\mid x)}[r(y)].
\]
This also satisfies
\[
    \mathbb E_{\pi_{\mathrm{old}}}[A_G(x,y)]
    =
    \beta,
\]
so the same Jensen argument guarantees that the induced population target
satisfies
\[
    \tau_s\ge 1.
\]

Therefore, the simple practical rule
\[
    \widehat A_i = r_i-\bar r+\beta
\]
places the Lambert target in the desired pessimistic regime
\(\tau_s\ge 1\). 

\end{proof}
\section{Discussions}

\subsection{Large-\(\beta_2\) limit of the OAPL advantage}\label{app:oapl_regime}

In the experimental setup, OAPL decouples the temperature
\(\beta_1\) used for regularization from the temperature \(\beta_2\) used
for advantage computation, choosing \(\beta_2 \gg \beta_1\). This choice
makes the advantage closer to a centered group advantage and puts it in the nice regime.  Indeed, letting
\[
    \bar r = \frac{1}{G}\sum_{j=1}^G r_j,
    \qquad
    \widehat{\mathrm{Var}}_G(r)
    =
    \frac{1}{G}\sum_{j=1}^G (r_j-\bar r)^2,
\]
we have, for large \(\beta_2\),
\begin{align*}
    \widehat A_i^{\beta_2}
    &=
    r_i
    -
    \beta_2
    \log
    \left(
        \frac{1}{G}
        \sum_{j=1}^G
        \exp(r_j/\beta_2)
    \right)
    \\
    &=
    r_i
    -
    \left(
        \bar r
        +
        \frac{1}{2\beta_2}
        \widehat{\mathrm{Var}}_G(r)
        +
        O(\beta_2^{-2})
    \right)
    \\
    &=
    r_i-\bar r
    -
    \frac{1}{2\beta_2}
    \widehat{\mathrm{Var}}_G(r)
    +
    O(\beta_2^{-2})
    \\
    &\xrightarrow[\beta_2\to\infty]{}
    r_i-\bar r .
\end{align*}
Thus, as \(\beta_2\) becomes large, the log-sum-exp centered advantage
approaches the usual group-centered advantage.

At the population level, the limiting induced advantage is therefore
\begin{align*}
    A_G^\infty(x,y)
    &=
    \mathbb E
    \left[
        r_i-\bar r
        \mid y_i=y,x
    \right]
    \\
    &=
    r(y)
    -
    \frac{1}{G}
    \left(
        r(y)
        +
        (G-1)
        V_{\mathrm{old}}(x)
    \right)
    \\
    &=
    \frac{G-1}{G}
    \left(
        r(y)-V_{\mathrm{old}}(x)
    \right),
\end{align*}
where
\[
    V_{\mathrm{old}}(x)
    =
    \mathbb E_{y\sim\pi_{\mathrm{old}}(\cdot\mid x)}
    [r(y)].
\]
This induced advantage is centered under the behavior policy:
\begin{align*}
    \mathbb E_{y\sim\pi_{\mathrm{old}}(\cdot\mid x)}
    \left[
        A_G^\infty(y\mid x)
    \right]
    &=
    \frac{G-1}{G}
    \mathbb E_{y\sim\pi_{\mathrm{old}}(\cdot\mid x)}
    \left[
        r(y)-V_{\mathrm{old}}(x)
    \right]
    \\
    &=
    0.
\end{align*}
Consequently, at the regularization temperature \(\beta_1\),
\begin{align*}
    Z_G^\infty(x)
    &:=
    \mathbb E_{y\sim\pi_{\mathrm{old}}(\cdot\mid x)}
    \left[
        \exp
        \left(
            \frac{A_G^\infty(x,y)}{\beta_1}
        \right)
    \right]
    \\
    &\ge
    \exp
    \left(
        \frac{
            \mathbb E_{y\sim\pi_{\mathrm{old}}(\cdot\mid x)}
            [A_G^\infty(x,y)]
        }{\beta_1}
    \right)
    \\
    &=
    1,
\end{align*}
with strict inequality whenever the rewards are non-degenerate. Hence the
large-\(\beta_2\) limit lies in the positive-\(\tau_s\) regime. By
continuity, choosing \(\beta_2\gg\beta_1\) pushes the finite-\(\beta_2\)
target back toward this positive-\(\tau_s\) regime. This implicitly
implements the pessimism needed to stabilize off-policy learning.


\end{document}